\title[A New Accelerated Stochastic Gradient Method with Momentum]{A New Accelerated Stochastic Gradient Method with Momentum}
  \author{\Name{liang liu} \Email{mf1915058@smail.nju.edu.cn}\AND
  \Name{Xiaopeng Luo} \Email{xpluo@nju.edu.cn}\\
  \addr Nanjing China
 }
\begin{document}

\maketitle

\begin{abstract}
In this paper, we propose a novel accelerated stochastic gradient method with momentum, which momentum is the weighted average of previous gradients. The weights decays inverse proportionally with the iteration times. Stochastic gradient descent with momentum (Sgdm) use weights that decays exponentially with the iteration times to generate an momentum term.
Using exponentially decaying weights, variants of Sgdm with well designed and complicated formats have been proposed to achieve better performance.
The momentum update rules of our method is as simple as that of Sgdm.
We provide theoretical convergence properties analyses for our method, which show both the exponentially decay weights and our inverse proportionally decay weights can limit the variance of the moving direction of parameters to be optimized to a region.
Experimental results empirically show that our method works well with practical problems and outperforms Sgdm, and it outperforms Adam in convolutional neural networks.
\end{abstract}
\begin{keywords}
exponential decaying rate weight, gradient descent,  inverse proportional decay rate weight, momentum 
\end{keywords}

\section{Introduction}

A stochastic optimization problem can be consider as minimizing a differentiable function $F: \mathbb{R}^d \rightarrow \mathbb{R}$ using stochastic gradient descent (SGD) optimizer over a data set $\mathcal{S}$ which has $n$ samples. Function $F(x_k)$, in the empirical risk format, can be written as $F(x_k) = \frac{1}{n}\sum_{i=1}^{n}f_i(x_k)$, where $f_i(x_k)$ is the realization of $F(x_K)$ with the $i$th sample. Given the initial $x_0$ and a stepsize $\alpha_{k=0} >0$, the optimizer iterates with the following rule 
\begin{equation}
x_{k+1} \leftarrow x_k - \alpha_kg_i(x_k),
\end{equation}
until $F(x_k)$ reaches a predefined state.
The vector $g_i(x_k)$ is the stochastic gradient of $f_i(x)$, which gradient satisfies $\mathbb{E}[g_i(x_k)] = \nabla F(x_k)$ and has bounded variance~\cite{bottou2018}. For large-scale data sets, the computational complexity of iterating with full gradient $F(x) = \frac{1}{n}\sum_{i=1}^{n}$ is unacceptable. It is more efficient to measure a single component gradient $\nabla f_{i}(x_k)$, where $i$ is the index of the uniformly selected sample from the $n$ samples, and move in the noisy direction $g_i(x_k) = \nabla f_{i}(x_k)$, than to move in the full gradient direction $\nabla F(x_k) = \frac{1}{n}\sum_{i=1}^{n}\nabla f_{i}(x_k)$ with a bigger stepsize~\cite{adagradstepsize,bottou2018}. However, choosing a feasible and suitable stepsize schedule $\{\alpha_k>0\}$ is difficult.
According to the work of Robbins and Monro~\cite{robbins1951}, the stepsize schedule should satisfy
\begin{equation}
\sum_{k=1}^{\infty}\alpha_k = \infty \quad \text{and} \quad \sum_{k=1}^{\infty}\alpha_k^2 < \infty
\end{equation}
to make $\lim_{k\rightarrow\infty}\mathbb{E}[\|\nabla F(x_k)\|^2] = 0$. This constraint makes full gradient descent methods converge slower than stochastic SGD methods.

Methods using gradients in previous iterations as momentum terms have been proposed to accelerate the convergence and show great benefits~\cite{adam,sgdm}. Yet, theoretical analyses of those method in stochastic settings are elusive, and the ways to generate the momentum term from previous gradients can be revised.

\subsection{Stochastic Gradient Methods With Momentum}
Stochastic gradient descent methods with momentum (Sgdm) update parameter $x$ with
\begin{equation}
x_{k+1} \leftarrow x_k - \alpha_kg_k(x_k) + \gamma m_k,
\label{sgdm_x}
\end{equation}
where $m_k$ is called a momentum term, and Sgdm updates the momentum term with
\begin{equation}
m_{k} = \gamma m_{k-1} - \alpha_{k-1} g_k(x_{k-1}).
\label{sgdm_m}
\end{equation}
If $\gamma$ is a constant, then we call it a exponentially decay rate. With~(\ref{sgdm_m}), we can rewrite the update rule~(\ref{sgdm_x}) as
\begin{equation}
x_{k+1} = x_k + \sum_{j=1}^{k}\alpha_j\gamma^{k-j}g_{j}(x_j),
\label{x_update}
\end{equation}
which shows that the moving direction here is a weighted average of all gradients, with exponential decay weights. We denote the weighted average of gradients as $v_k = \sum_{j=1}^{k}\alpha_j\gamma^{k-j}g_{j}(x_j)$. It is believed that the momentum term can reduce the variance~\cite{initialization2013}. Using a exponential decay rate is common and effective. According our following analyses, a exponential decay rate can limit the variance of $v_k$ to a region which determined by the constant $\gamma$.

{\bf Main Contribution}
First, we proposed a novel stochastic gradient descent method with inverse proportional decay rate momentum, which method dynamically adjusts the momentum to match the convergence of the stochastic optimization problem. Besides, our rigorous analyses prove that both the simple SGDM and our novel momentum term can limit the variance to a region $\gamma$ (Theorem~\ref{theorem2},Theorem~\ref{theorem3}). We list out two main theorems (informally) in the following.

For a differential function $F$ with $L$-Lipschitz gradient and the variance of gradient is limited,  Theorem~\ref{theorem2} implies that the momentum term with exponentially decay weights can limit the variance of $v_k$
\begin{displaymath}
\mathbb{V}[v_k]\le\frac{1}{1-\gamma^2}\alpha_0^2(M+2M_V(\|\nabla F(x_k)\|_2^2 + L^2D^2)).
\end{displaymath}
Theorem~\ref{theorem3} implies that the momentum term with our inverse proportional decay weights can limit the variance of $v_k$
\begin{displaymath}
\mathbb{V}[v_k]\le\frac{\alpha_0^2}{2\beta}\Big(M+2M_V\|\nabla F(x_k)\|_2^2+2M_VL^2D^2\Big).
\end{displaymath}

Extensive experiments in Section 4 shows that the robustness of our method extends from linear regression to practical model in deep learning problems.

\subsection{Previous Work}
Momentum method and its use within optimization problems has been studied extensively ~\cite{initialization2013, o1996, sgdm}. The classical momentum (CM) method ~\cite{polyak} accumulates a decaying sum of the previous updates of parmeter $x$ into a momentum term $m$ using equation~(\ref{sgdm_m}) and updates $x$ with~(\ref{sgdm_x}). CM uses constant hyperparameter $\gamma$ and learning rate $\alpha$. With this method, one can see that the steps tend to accumulate
contributions in directions of persistent descent, while directions that oscillate tend
to be cancelled, or at least remain small~\cite{bottou2018}. Thus, this method can make optimization algorithms move faster along dimensions of low curvature where the update is small and its direction is persistent and slower along turbulent dimensions where the update usually significantly changes its direction ~\cite{initialization2013}. 
The adaptive momentum method in~\cite{sgdm} uses decaying learning rate and adaptive $\beta = \max(0, 1- \alpha_0 \mu^2)$, where $\mu(t)$ is the network input at time $t$.  
\subsection{Our Method}

Our method updates $x_k$ with 
\begin{equation}
x_{k+1} \leftarrow x_k - \alpha_k g_{k}(x_k) + \gamma(k)m_k, \label{x_novel_update}
\end{equation} and it updates the momentum term $m_K$ with
\begin{equation}
m_k \leftarrow \gamma(k-1)m_{k-1} - \alpha_{k-1}g_{k-1}(x_{k-1}), \label{m_novel_update}
\end{equation} where
\begin{equation}
y(k) = \left(\frac{k}{k+1}\right)^\beta.
\label{gamma_novel_update}
\end{equation}
The hyperparameter $\beta$ is a predefined constant. Thus we can rewrite the updating rules (\ref{x_novel_update}),(\ref{m_novel_update}), and (\ref{gamma_novel_update}) as 
\begin{displaymath}
x_{k+1} \leftarrow x_k - \sum_{i=1}^{k}\alpha_ig_i(x_i)\left(\frac{i}{k}\right)^\beta.
\end{displaymath}
We denote the moving direction of $x_k$ as $v_k =- \sum_{i=1}^{k}\alpha_ig_i(x_i)\left(\frac{i}{k}\right)^\beta$. 

We use the same stepsize as shown in (adam) 
\begin{equation}
	\alpha_i = \frac{\alpha_0}{\sqrt{i}}
	\label{stepsize}
\end{equation}
And it is straightforward that $\sum_{i}^{\infty}\alpha_k = \infty$.

\begin{algorithm}[tb]
	\caption{Our method}
	\label{algorithm}
	\begin{algorithmic}
		\STATE{\bfseries Require:} Stepsize $\alpha$
		\STATE{\bfseries Require:} Factor of decay rates, $\beta$
		\STATE{\bfseries Require:} Stochastic objective function $f(x)$
		\STATE{\bfseries Require:} Initial parameter vector $x_0$
		\STATE Initialize momentum vector $m_0\leftarrow0$
		\STATE Initialize iteration times $k\leftarrow0$
		\REPEAT
		\STATE $k \leftarrow k+1$
		\STATE $\alpha_{k} \leftarrow \frac{\alpha_0}{\sqrt{k}}$
		\STATE Generate gradient vector $g_k\leftarrow \nabla_x f_k(x_{k-1})$
		\STATE $m_k \leftarrow \gamma(k) m_{k-1} - \alpha_{k}g_k$
		\STATE $x_k \leftarrow x_{k-1} + m_k$
		\UNTIL{predefined conditions are achieved}
		\RETURN $x_k$
	\end{algorithmic}
\end{algorithm}

\section{Algorithm}
See Algorithm~\ref{algorithm} for the pseudo-code of our proposed algorithm. We denote $f(x)$ as a risk function with noise, which is differentiable with respect to parameters $x$. The optimization problem is to minimize the expected risk function $\mathbb{E}[f(x)]$ with respect to parameters $x$. We denote $f_1(x),\dots,f_T(x)$ as the realizations of the risk function at subsequent iterations $1,\dots,T$. Let $g_t = \nabla_x f_t(x)$ be the gradient of $f_t(x)$ with respect to $x$ in the $t$th iteration. 

Our algorithm updates weighted averages of the previous gradients, denoted as $m_k$. The hyper-parameter $\beta\in(1,\infty)$ control the decay rates of the weighted averages. We initialize $m_k$ as a vector with all 0 elements.

\section{Convergence Analysis}
In this section, we show that the momentum term could reduce the variance of stochastic directions $v_k$ along the iterations progress. A fixed exponential decay factor $\gamma$ and a changing factor $\gamma(k)$ can both limit the variance within a small range.

Let us begin with a basic assumption of smoothness of the objective function.

\newtheorem{assumption}{Assumption~}[section]

\begin{assumption}[Lipschitz-continuous gradients]\label{ac:ass:lcg}
	The objective function $F: \mathbb{R}^d\rightarrow \mathbb{R}$ is continuously differentiable and the gradient function of $F$, namely, $\nabla F:\mathbb{R}^d\rightarrow\mathbb{R}^d$, is Lipschitz continuous with Lipschitz constant $L>0$, i.e.,
	\begin{equation*}
	\|\nabla F(x)-\nabla F(\bar{x})\|_2\le L\|x-\bar{x}\|_2
	~~\textrm{for all}~~\{x, \bar{x}\}\subset\mathbb{R}^d.
	\end{equation*}
\end{assumption}

Assumption~3.1 is essential for convergence analyses of most gradient-based methods, which ensures that the gradient of $F$ does not change arbitrarily quickly with respect to the parameter vector $x$. Based on Assumption~3.1, we have
\begin{equation}
F(x)\le F(\bar{x}) + \nabla F (\bar{x})^T(x- \bar{x}) + \frac{1}{2}L\|x - \bar{x}\|_2^2,
\label{lip_inequal}
\end{equation}
which inequality holds for all $\{x, \bar{x}\}\subset\mathbb{R}^d$.

\begin{proof}
	Using Assumption 3.1, we have
	\begin{displaymath}
	\begin{aligned}
	F(x) &= F(\bar{x}) + \int_0^1\frac{\partial F(\bar{x}+t(x - \bar{x}))}{\partial t}dt\\
	&= F(\bar{x}) + \int_0^1\nabla F(\bar{x}+t(x-\bar{x}))^T(x-\bar{x})dt\\
	&=F(x) + \nabla F(\bar{x})^T(x-\bar{x}) +\int_0^1[\nabla F(\bar{x}) + t(x - \bar{x}) -\nabla F(\bar{x})]^T(x - \bar{x})dt \\
	&\le F(\bar{x}) + \nabla F(\bar{x})^T(x-\bar{x}) + \int_0^1 L \|t(x-\bar{x})\|_2\|x-\bar{x}\|_2dt,
	\end{aligned}
	\end{displaymath}
	from which the desired result follows.
	\end{proof}

\subsection{Variance Analysis}
We first consider a fixed decay factor $\gamma\in(0,1)$, which is used in many simple stochastic gradient descent methods with momentum.

\begin{assumption}\label{ac:ass:var}
	The objective function F and stochastic gradient $g_k(x_k)$ satisfy for all $k \in \mathbb{N}$, there exist scalars $M\ge0$ and $M_v \ge 0$ such that
	\begin{equation*}
	\mathbb{V}_{x_k}[g_i(x_k)] \le M + M_V\|\nabla F(x_k)\|_2^2.
	\end{equation*}
\end{assumption}

\begin{theorem}\label{theorem1}
	Under Assumptions \ref{ac:ass:lcg} and \ref{ac:ass:var}, suppose that the algorithm 1 satisfies $\|x_i-x_j\|\leqslant D$ for any $i,j\in\mathbb{N}$. Then 
	\begin{equation*}
	\|\nabla F(x_j)\|_2^2\leqslant2\|\nabla F(x_k)\|_2^2+2L^2D^2.
	\end{equation*}
	\end{theorem}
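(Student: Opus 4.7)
The plan is to prove this via the triangle inequality combined with Lipschitz continuity of the gradient, and then apply the elementary bound $(a+b)^2 \le 2a^2 + 2b^2$.

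First I would write $\nabla F(x_j) = \bigl(\nabla F(x_j) - \nabla F(x_k)\bigr) + \nabla F(x_k)$ and apply the triangle inequality to get
\begin{equation*}
\|\nabla F(x_j)\|_2 \le \|\nabla F(x_j) - \nabla F(x_k)\|_2 + \|\nabla F(x_k)\|_2.
\end{equation*}
Next I would invoke Assumption~\ref{ac:ass:lcg} (the $L$-Lipschitz continuity of $\nabla F$) together with the standing hypothesis $\|x_i - x_j\| \le D$ to bound the first term by $L\|x_j - x_k\|_2 \le LD$, giving
\begin{equation*}
\|\nabla F(x_j)\|_2 \le LD + \|\nabla F(x_k)\|_2.
\end{equation*}

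Finally, squaring both sides and using the elementary inequality $(a+b)^2 \le 2a^2 + 2b^2$ (equivalently, $2ab \le a^2 + b^2$) yields
\begin{equation*}
\|\nabla F(x_j)\|_2^2 \le 2\|\nabla F(x_k)\|_2^2 + 2L^2 D^2,
\end{equation*}
which is exactly the claimed bound.

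There is no real obstacle here; this lemma is essentially a two-line consequence of Lipschitz continuity and is intended as a uniform bound that will later let one swap $\|\nabla F(x_j)\|_2^2$ for $\|\nabla F(x_k)\|_2^2$ inside the sums appearing in the variance estimates for $v_k$ in Theorems~\ref{theorem2} and~\ref{theorem3}. The only mild subtlety is that the bound is symmetric in $j$ and $k$ (the roles could be swapped), so the constant $2$ and the additive $2L^2D^2$ term are the price paid for using a crude triangle-inequality split rather than a tighter expansion.
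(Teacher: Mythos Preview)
Your proof is correct and follows essentially the same route as the paper: both arrive at $\|\nabla F(x_j)\|_2 \le \|\nabla F(x_k)\|_2 + L\|x_j - x_k\|_2$, square, and apply $2ab \le a^2 + b^2$. Your version is in fact slightly cleaner, since the paper detours through an unnecessary (and not obviously justified) representation $\nabla F(x_j) - \nabla F(x_k) = \Lambda\delta_{j,k}$ with a diagonal $\Lambda$ before reaching the same triangle-inequality bound you write down directly.
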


\begin{proof}
	According to Assumption \ref{ac:ass:lcg}, there is a diagonal matrix $\Lambda= \mathrm{diag}(\lambda_1,\cdots,\lambda_d)$ with $\lambda_i\in [-L,L]$ such that
	\begin{align*}
	\nabla F(x_j)=\nabla F(x_k+\delta_{j,k})=\nabla F(x_k)+\Lambda\delta_{j,k},
	\end{align*}
	where $x_j=x_k+\delta_{j,k}$; and by further noting that $\|\delta_{j,k}\|_2=\|x_j-x_k\|_2\leqslant D$, we have
	\begin{align*}
	\|\nabla F(x_j)\|_2^2\leqslant&\Big(\|\nabla F(x_k)\|_2
	+L\|\delta_{j,k}\|_2\Big)^2 \\
	\leqslant&\|\nabla F(x_k)\|_2^2+L^2\|\delta_{j,k}\|_2^2
	+2L\|\delta_{j,k}\|_2\|\nabla F(x_k)\|_2 \\
	\leqslant&2\|\nabla F(x_k)\|_2^2+2L^2\|\delta_{j,k}\|_2^2 \\
	\leqslant&2\|\nabla F(x_k)\|_2^2+2L^2D^2
	\end{align*}
	as claimed. 
\end{proof}

\begin{theorem}[a fixed decay factor]\label{theorem2}
	Under the conditions of Theorem~\ref{theorem1} and Assumption~\ref{ac:ass:var}, suppose that (i) the sequence of iterates $\{x_k\}$ is generated with (\ref{x_update}) using a fixed factor $\gamma \in (0,1)$ and a stepsize sequence $\{\alpha_k\}$, which sequence satisfies  $\alpha_k \ge \alpha_{k+1}$ for all $k \in \mathbb{N}$, and (ii) the sequence $\{x_k\}$ satisfies $\|x_i - x_j\|\le D$ for any $i,j\in\mathcal{N}$, then
	\begin{displaymath}
	\mathbb{V}[v_k]\le \frac{1}{1-\gamma^2}\alpha_0^2\big(M+2M_V\|\nabla F(x_k)\|_2^2+2M_VL^2D^2\big)
	\end{displaymath}
\end{theorem}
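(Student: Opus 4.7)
The plan is to bound $\mathbb{V}[v_k]$ by exploiting the fact that $v_k=\sum_{j=1}^k\alpha_j\gamma^{k-j}g_j(x_j)$ is a weighted sum of stochastic gradients across iterations, then control each variance term individually using Assumption \ref{ac:ass:var} and Theorem \ref{theorem1}, and finally collapse the geometric sum in $\gamma$.

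First I would write $\mathbb{V}[v_k]$ as the sum of per-iteration variances. The stochastic gradient $g_j(x_j)$ depends only on the sample drawn at iteration $j$; conditional on the history $\{x_1,\dots,x_j\}$, the samples are independent across $j$, so the cross-covariance terms vanish and
\begin{displaymath}
\mathbb{V}[v_k]=\sum_{j=1}^{k}\alpha_j^2\gamma^{2(k-j)}\,\mathbb{V}_{x_j}[g_j(x_j)].
\end{displaymath}
(Any cross terms coming from $x_j$ being itself random can be swept in by a tower-property argument; this is the one subtlety I would have to handle carefully.)

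Next I would invoke Assumption \ref{ac:ass:var} termwise to replace each $\mathbb{V}_{x_j}[g_j(x_j)]$ by $M+M_V\|\nabla F(x_j)\|_2^2$. Since the theorem is stated at the iterate $x_k$, I need to convert bounds involving $\|\nabla F(x_j)\|_2^2$ into bounds involving $\|\nabla F(x_k)\|_2^2$; this is exactly the purpose of Theorem \ref{theorem1} combined with the bounded-diameter assumption $\|x_i-x_j\|\le D$, giving $\|\nabla F(x_j)\|_2^2\le 2\|\nabla F(x_k)\|_2^2+2L^2D^2$. Substituting yields a uniform per-term bound
\begin{displaymath}
\mathbb{V}_{x_j}[g_j(x_j)]\le M+2M_V\|\nabla F(x_k)\|_2^2+2M_VL^2D^2,
\end{displaymath}
which no longer depends on $j$.

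Finally I would pull this uniform bound out of the sum and handle the geometric weights. Using the stepsize monotonicity $\alpha_j\le\alpha_0$ for all $j$ (since $\alpha_k$ is non-increasing and I interpret $\alpha_0$ as the majorizing initial stepsize), the remaining factor is
\begin{displaymath}
\sum_{j=1}^{k}\alpha_j^2\gamma^{2(k-j)}\le\alpha_0^2\sum_{j=1}^{k}\gamma^{2(k-j)}\le\frac{\alpha_0^2}{1-\gamma^2},
\end{displaymath}
and multiplying through gives the claimed inequality. The main obstacle I expect is the first step: the $x_j$ are themselves random and correlated with earlier gradients, so the clean decomposition of $\mathbb{V}[v_k]$ into a sum of per-step variances is not automatic. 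I would resolve this by conditioning on the filtration generated by the samples up to iteration $j-1$, using the unbiasedness $\mathbb{E}[g_j(x_j)\mid x_j]=\nabla F(x_j)$ to kill cross terms, and then applying the tower property; everything after that step is essentially bookkeeping.
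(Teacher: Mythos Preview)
Your proposal is correct and follows essentially the same route as the paper: decompose $\mathbb{V}[v_k]$ into $\sum_j\alpha_j^2\gamma^{2(k-j)}\mathbb{V}_{x_j}[g_j(x_j)]$, apply Assumption~\ref{ac:ass:var} and Theorem~\ref{theorem1} termwise, bound $\alpha_j^2\le\alpha_0^2$, and sum the geometric series. The only difference is that you explicitly flag the filtration/tower-property argument needed to justify dropping the cross terms, which the paper simply asserts without comment.
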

\begin{proof}
	For the SGDM updating strategy, we have direction vector
	\begin{displaymath}
		 v_k = -\sum_{i=1}^{k}\alpha_i\gamma^{k-i} g_i(x_i).
	\end{displaymath}
	Hence, along with Assumption \ref{ac:ass:var}, we obtain
	\begin{align*}
	\mathbb{V}[v_k]=&\sum_{j=1}^k\gamma^{2(k-j)}\alpha_j^2
	\mathbb{V}_{x_j}[g_j(x_j)] \\
	\leqslant&\alpha_0^2\sum_{j=1}^k\gamma^{2(k-i)}
	\Big(M+M_V\|\nabla F(x_j)\|_2^2\Big) \\
	\leqslant&\sum_{i=1}^k\alpha_0^2\gamma^{2(k-i)}
	\Big(M+2M_V\|\nabla F(x_k)\|_2^2+2M_VL^2D^2\Big).
	\end{align*}
	Notice that 
	\begin{align*}
	\sum_{j=1}^k\gamma^{2(k-j)}=\frac{1-\gamma^{2k}}{1-\gamma^2}.
	\end{align*}
	Since $\frac{1-\gamma^{2k}}{1-\gamma^2}$ decays to $\frac{1}{1-\gamma^2}$ as $k$ increases for $0<\gamma<1$, so the variance of $v_k$ could be finally reduced to
	\begin{equation*}
	\frac{1}{1-\gamma^2}\alpha_0^2\Big(M+2M_V\|\nabla F(x_k)\|_2^2+2M_VL^2D^2\Big).
	\end{equation*}
\end{proof}

\begin{theorem}[a changing decay factor]\label{theorem3}
	Under the conditions of Theorem~\ref{theorem1} and Assumption~\ref{ac:ass:var}, suppose that (i) the sequence of iterates $\{x_k\}$ is generated with (\ref{x_novel_update}) using a changing factor $\gamma(k) \in (0,1)$ taking the form $\gamma_k = (k/k+1)^\beta$ and a stepsize sequence $\{\alpha_k\}$ as (\ref{stepsize}), which sequence satisfies  $\alpha_k \ge \alpha_{k+1}$ for all $k \in \mathbb{N}$, and (ii) the sequence $\{x_k\}$ satisfies $\|x_i - x_j\|\le D$ for any $i,j\in\mathcal{N}$, then
	\begin{displaymath} 
	\mathbb{V}[v_k]\le \frac{\alpha_0^2}{2\beta}\Big(M+2M_V\|\nabla F(x_k)\|_2^2+2M_VL^2D^2\Big)
	\end{displaymath}
\end{theorem}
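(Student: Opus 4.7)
The plan is to mirror the proof of Theorem~\ref{theorem2}, replacing the geometric sum $\sum_j\gamma^{2(k-j)}$ by a polynomial sum induced by $\gamma(k)=(k/(k+1))^\beta$, and then estimating that sum via an integral.

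Starting from the closed form already exhibited in the paper, I would write $v_k=-\sum_{i=1}^k\alpha_i(i/k)^\beta g_i(x_i)$. Because the stochastic gradients are conditionally independent across iterations (as implicitly assumed in the variance computation of Theorem~\ref{theorem2}), the variance decomposes term-by-term:
\begin{equation*}
\mathbb{V}[v_k]=\sum_{i=1}^k\left(\frac{i}{k}\right)^{2\beta}\alpha_i^2\,\mathbb{V}_{x_i}[g_i(x_i)].
\end{equation*}
From here the next two moves are identical to those in Theorem~\ref{theorem2}: apply Assumption~\ref{ac:ass:var} to bound $\mathbb{V}_{x_i}[g_i(x_i)]\le M+M_V\|\nabla F(x_i)\|_2^2$, and then use Theorem~\ref{theorem1} to eliminate the $i$-dependence via $\|\nabla F(x_i)\|_2^2\le 2\|\nabla F(x_k)\|_2^2+2L^2D^2$. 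This produces
\begin{equation*}
\mathbb{V}[v_k]\le\Big(M+2M_V\|\nabla F(x_k)\|_2^2+2M_V L^2 D^2\Big)\sum_{i=1}^k\left(\frac{i}{k}\right)^{2\beta}\alpha_i^2,
\end{equation*}
so the entire proof reduces to bounding the weight sum.

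Substituting $\alpha_i^2=\alpha_0^2/i$ from (\ref{stepsize}) collapses that weight sum to $S_k=\alpha_0^2 k^{-2\beta}\sum_{i=1}^k i^{2\beta-1}$. Since $\beta>1$, the integrand $x\mapsto x^{2\beta-1}$ is monotonically increasing, so the standard integral comparison
\begin{equation*}
\sum_{i=1}^k i^{2\beta-1}\le\int_1^{k+1}x^{2\beta-1}\,dx=\frac{(k+1)^{2\beta}-1}{2\beta}\le\frac{(k+1)^{2\beta}}{2\beta}
\end{equation*}
yields $S_k\le\frac{\alpha_0^2}{2\beta}(1+1/k)^{2\beta}$, which decreases to $\alpha_0^2/(2\beta)$ as $k\to\infty$. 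Taking this asymptotic bound (exactly as Theorem~\ref{theorem2} takes the limit of $(1-\gamma^{2k})/(1-\gamma^2)$) delivers the stated inequality.

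The main obstacle is this final integral estimate: one must pick the right majorizing interval, using $\int_1^{k+1}$ rather than $\int_0^k$ because the integrand is increasing, and then argue that the $(1+1/k)^{2\beta}$ prefactor may be dropped in the same asymptotic sense already used in Theorem~\ref{theorem2}. Everything else is a mechanical transcription of the previous proof, with $\gamma^{2(k-j)}$ replaced throughout by $(j/k)^{2\beta}$ and the geometric-series identity replaced by the power-sum integral bound above.
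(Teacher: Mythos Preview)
Your approach is essentially identical to the paper's: write $v_k$ in closed form, split the variance term-by-term, apply Assumption~\ref{ac:ass:var} then Theorem~\ref{theorem1}, substitute $\alpha_i^2=\alpha_0^2/i$, and bound $\sum_{i=1}^k i^{2\beta-1}$ by an integral. The only difference is bookkeeping on that last step: the paper writes $\frac{1}{k^{2\beta}}\int_1^k j^{2\beta-1}\,dj\le\frac{1}{2\beta}$ directly, whereas you (correctly noting that the integrand is increasing) use $\int_1^{k+1}$, pick up an extra $(1+1/k)^{2\beta}$ factor, and then pass to the limit exactly as in Theorem~\ref{theorem2}. Your version is the more careful one; otherwise the two proofs coincide.
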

\begin{proof}
	For our updating strategy
	\begin{displaymath}
	v_k = -\sum_{j=1}^{k}\frac{j^\beta}{k^\beta}\alpha_jg_j(x_j)
	\end{displaymath}
	Hence, along with Assumption \ref{ac:ass:var}, we obtain
	\begin{align*}
	\mathbb{V}[v_k]=&\alpha_0\sum_{j=1}^k\frac{j^{2\beta}}{k^{2\beta}}\frac{1}{j}
	\mathbb{V}_{\theta_j}[g_j(x_j)] \\
	\leqslant&\alpha_0^2\sum_{j=1}^k\frac{j^{2\beta}}{k^{2\beta}}\frac{1}{j}
	\Big(M+M_V\|\nabla F(x_j)\|_2^2\Big) \\
	\leqslant&\alpha_0^2\Big(M+2M_V\|\nabla F(x_k)\|_2^2+2M_VL^2D^2\Big)\frac{1}{k^{2\beta}}\int_{1}^{k}j^{2\beta-1}\\
	\leqslant& \frac{\alpha_0^2}{2\beta}\Big(M+2M_V\|\nabla F(x_k)\|_2^2+2M_VL^2D^2\Big),
	\end{align*}
	and the proof is complete.
\end{proof}

\section{Experiments}
To compare the performance of the proposed method with that of other methods, we investigate different popular machine learning models, including logistic regression, multi-layer fully connected neural networks and deep convolutional neural networks. Experimental results show that our novel momentum term can efficiently solve practical stochastic optimization problems in the field of deep learning and outperforms other methods in deep convolutional neural networks.

In our experiments, we use the same parameter initialization strategy for different optimization algorithms. The values of hyper-parameters are selected, based on results, from the common used settings, respectively.

\subsection{Experiment: Logistic Regression}
We evaluate the two methods (saying Sgdm, and our method), on two multi-class logistic regression problems using the MNIST dataset without regularization. Logistic regression is suitable for comparing different optimizers for its simple structure and convex objective function. In our experiments, we set the stepsize $\alpha_t = \frac{\alpha}{\sqrt{t}}$, where $t$ is the current iteration times. The logistic regression problems is to classify the class label directly on the $28\times28$ image matrix.

\begin{figure}[htbp]
	\centering
	\subfigure[]{\includegraphics[width=2.55in]{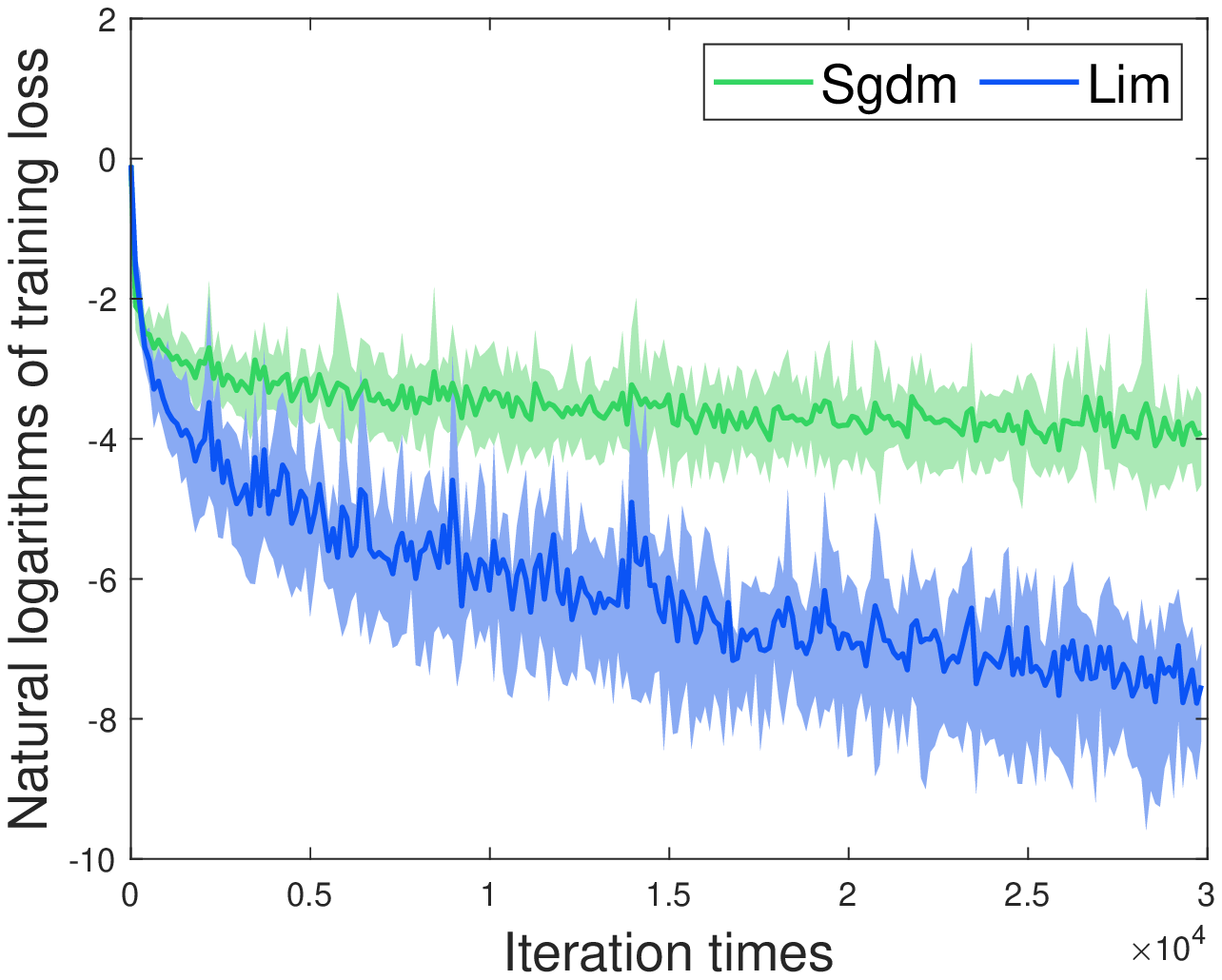}}
		\label{fig1a}
	\subfigure[]{\includegraphics[width=2.55in]{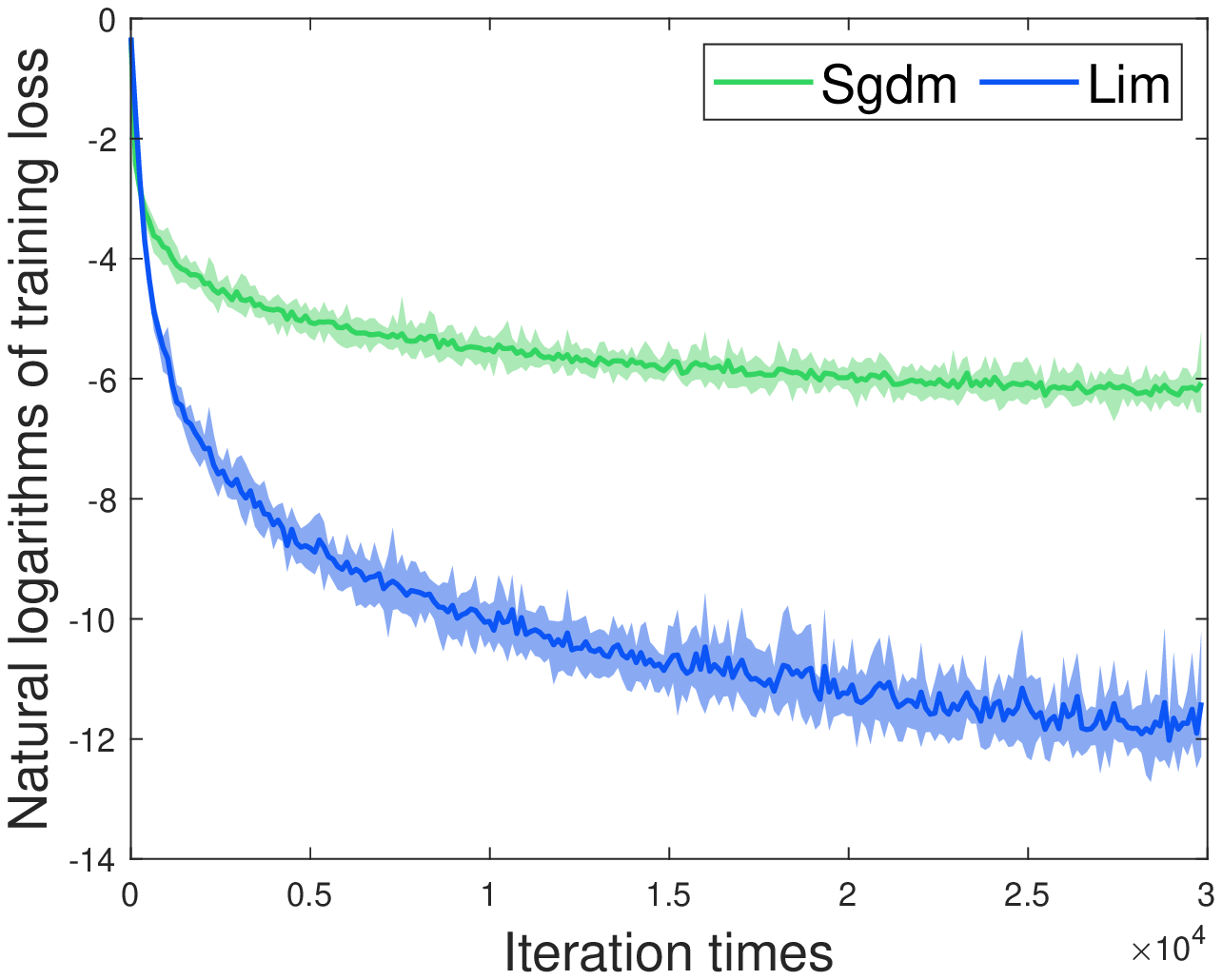}}
		\label{fig1b}
	\caption{The negative log likelihood training cost of logistic regression on MNIST images. (a) Normal logistic regression. (b) Logistic regression after a two layer fully connected neural network.}
\end{figure}

We compare the performance of our method (Lim for short) to optimize the logistic regression problem with that of Sgdm using a minibatch size of 128. According to Fig. 1, we found that our method converges faster than Sgdm.

\subsection{Experiment: Multi-layer Fully Connected Neural Networks}
Multi-layer neural networks are powerful models with non-convex objective functions. We empirically found that our method outperforms Sgdm. In our experiments, the multi-layer neural network models are consistent with that in previous publications, saying a neural network model with two or three fully connected hidden layers with 1000 hidden units each and ReLU activation are used for this experiment with a minibatch size of 128.
\begin{figure}[htp]
	\centering
	\subfigure[]{\includegraphics[width=2.55in]{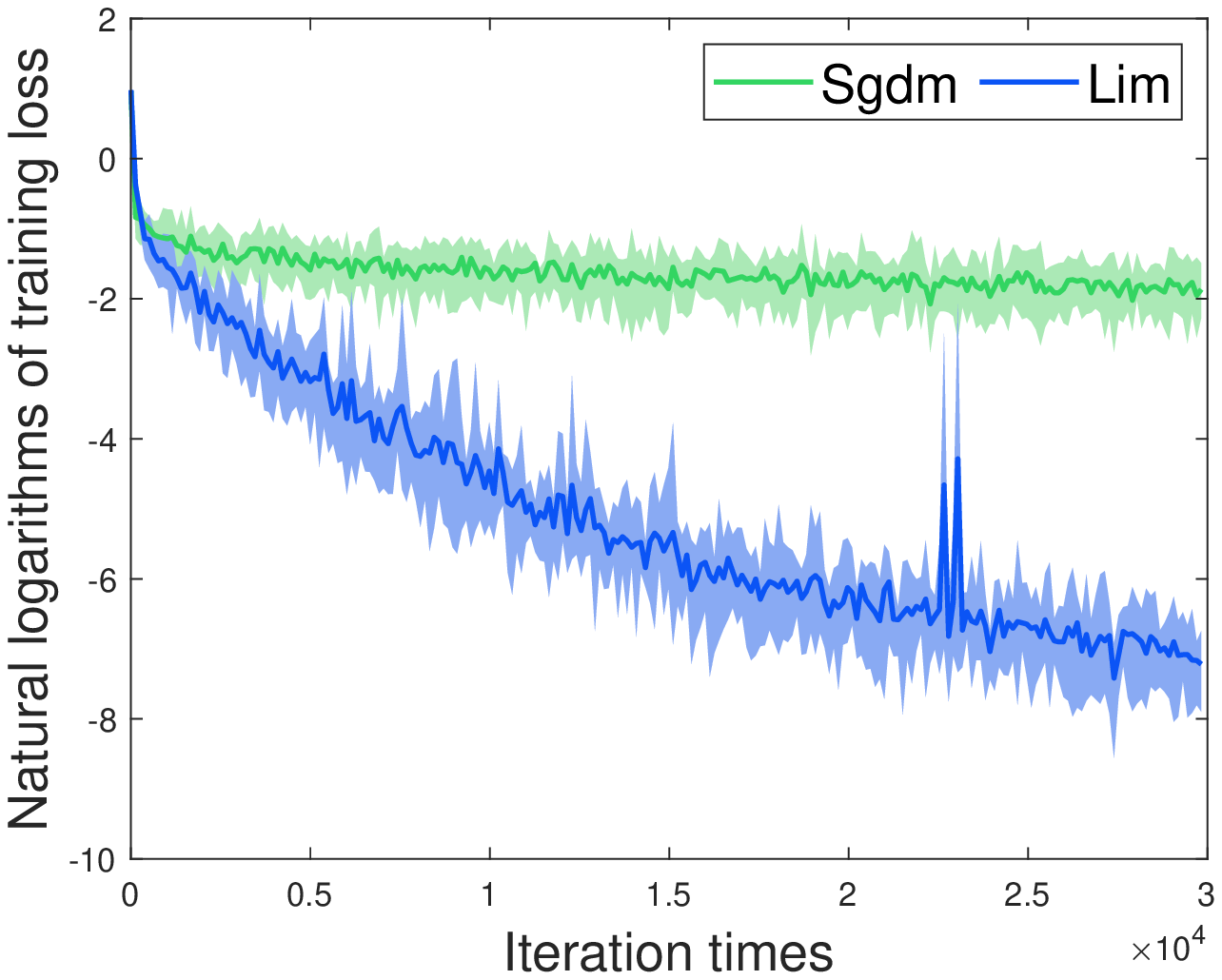}
		\label{fig9a}}
	\subfigure[]{\includegraphics[width=2.55in]{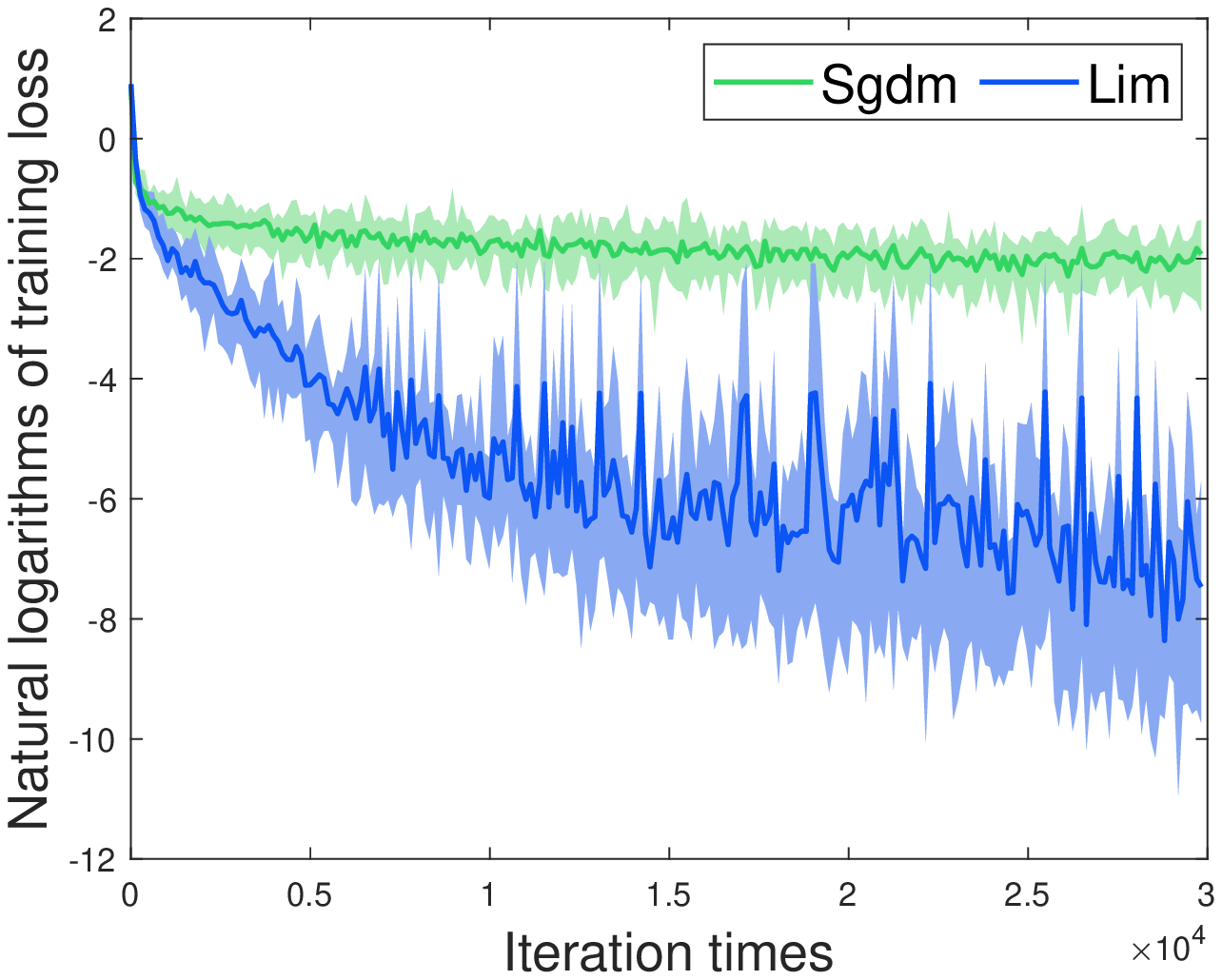}
		\label{fig9b}}
	\caption{Training cost of fully connected multilayer neural networks on MNIST images. (a) Neural network with two fully connected hidden layers.(b) Neural network with three fully connected hidden layers.}
	\label{fig9}
\end{figure}

We investigate the two optimizers using the standard deterministic cross-entropy objective function without regularization. According to Fig. 2, we found that our method  outperforms Sgdm by a large margin.

\subsection{Experiment: Deep Convolutional Neural Networks}
Deep convolutional neural networks (CNNs) have shown considerable success in practical machine learning tasks, e.g. computer vision tasks and natural language procession tasks. Our CNN architectures has three or two alternating stages of $5\times5$ convolution filters and $3\times3$ max pooling with stride of 2, each followed by a fully connected layer of 1000 units with RLeU activations. We pre-process the input image with whitening imply drop out noise to the input layer and fully connected layer. The minibatch size is also set to 128.

\begin{figure}[htp]
	\centering
	\subfigure[]{\includegraphics[width = 2.55in]{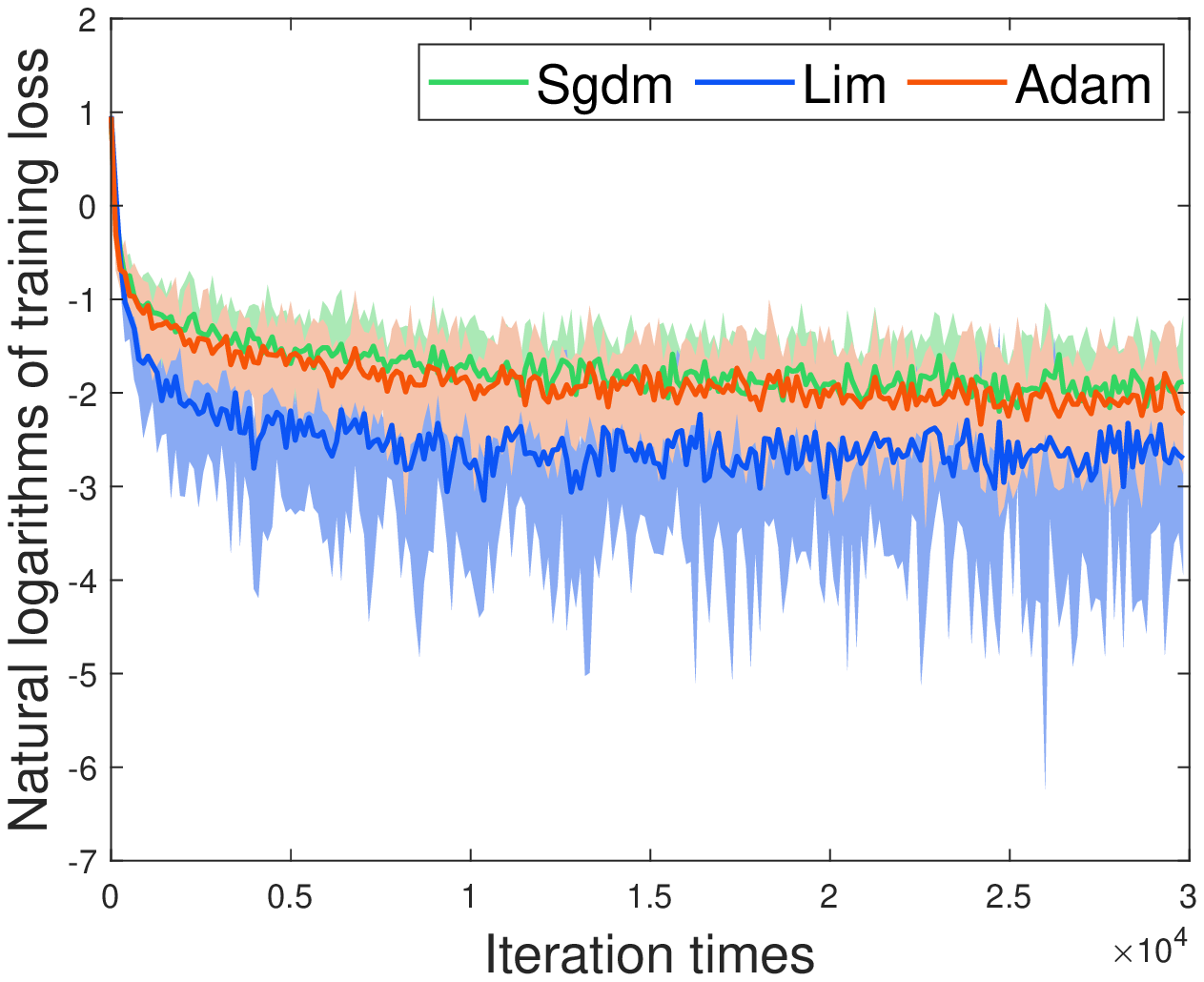}}
	\subfigure[]{\includegraphics[width = 2.55in]{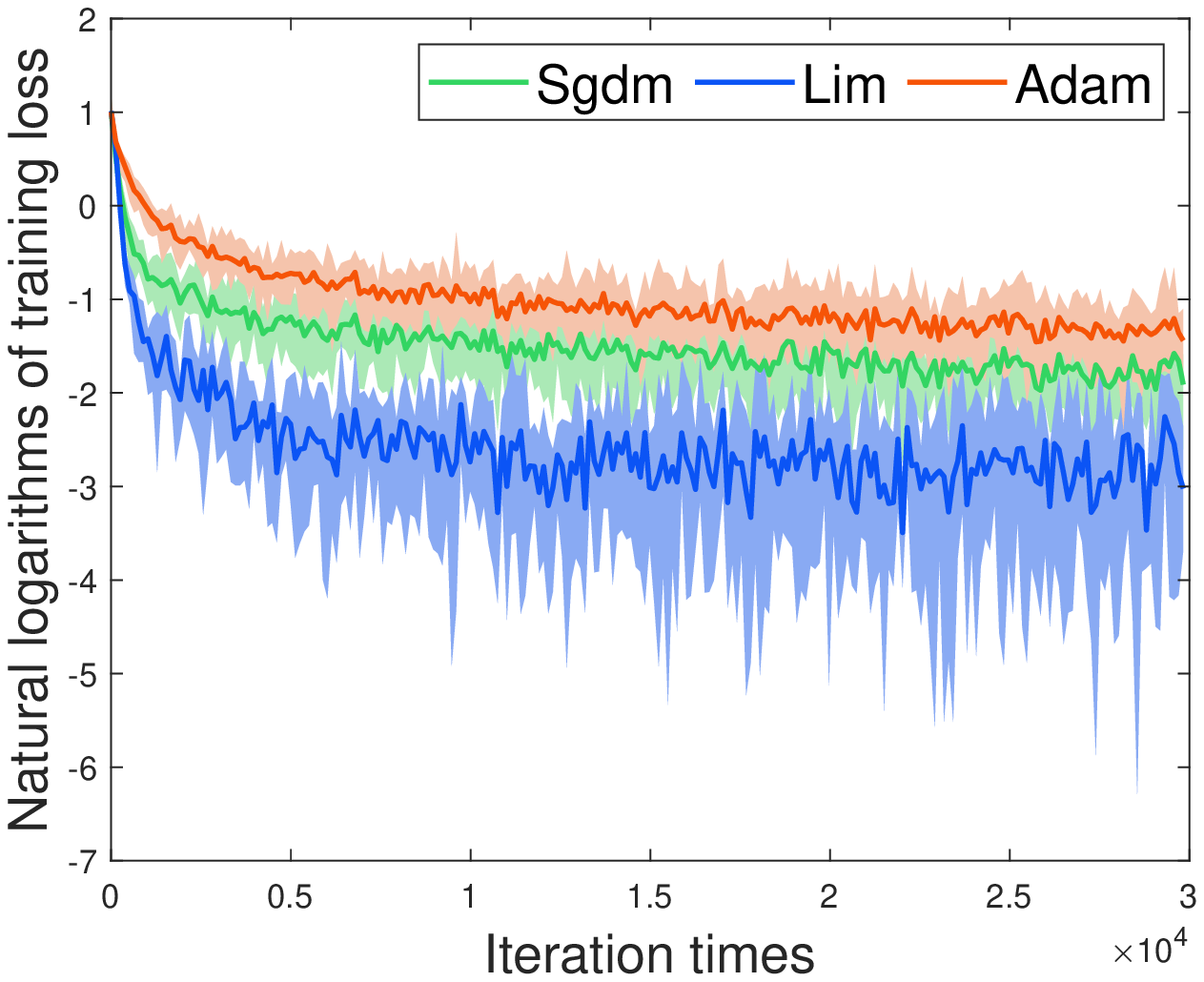}}
	\label{fig3}
	\caption{Training cost of deep convolutional neural networks on MNIST images. (a) CNN with two convolutional layers.(b) CNN with three convolutional layers.}
\end{figure}
 
We investigate the three optimizers using the standard deterministic cross-entropy objective function. According to Fig. 3, we found that our method converge faster than Adam and Sgdm.

\section{Conclusion}
In this paper, we introduced a computationally efficient algorithm for gradient-based stochastic optimization problems. The updating strategies of the proposed method enjoys the simplicity of the original SGD with momentum but utilizes the momentum more efficiently than SGD with momentum. Our method is designed for machine learning problems with large scale data sets and non-convex optimization problems, where is hard for stochastic optimizers to achieve linear converge speed. The experiment results confirm our theoretical analysis on its convergence property and show that our method can solve practical optimization problems efficiently. Overall, we found that our method is a robust and well-suited method to a wide range of non-convex optimization problems in the field of machine learning.

\bibliography{acml20}

\begin{thebibliography}{8}
\providecommand{\natexlab}[1]{#1}
\providecommand{\url}[1]{\texttt{#1}}
\expandafter\ifx\csname urlstyle\endcsname\relax
  \providecommand{\doi}[1]{doi: #1}\else
  \providecommand{\doi}{doi: \begingroup \urlstyle{rm}\Url}\fi

\bibitem[Bottou et~al.(2018)Bottou, Curtis, and Nocedal]{bottou2018}
Léon. Bottou, Frank~E. Curtis, and Jorge. Nocedal.
\newblock Optimization methods for large-scale machine learning.
\newblock \emph{SIAM Review}, 60\penalty0 (2):\penalty0 223--311, 2018.

\bibitem[{Kingma} and {Ba}(2014)]{adam}
Diederik~P. {Kingma} and Jimmy {Ba}.
\newblock {Adam: A Method for Stochastic Optimization}.
\newblock \emph{arXiv e-prints}, page arXiv:1412.6980, Dec 2014.

\bibitem[Leen and Orr(1993)]{sgdm}
Todd~K. Leen and Genevieve~B. Orr.
\newblock Optimal stochastic search and adaptive momentum.
\newblock In \emph{Proceedings of the 6th International Conference on Neural
  Information Processing Systems}, NIPS’93, page 477–484, San Francisco,
  CA, USA, 1993. Morgan Kaufmann Publishers Inc.

\bibitem[Orr(1996)]{o1996}
Genevieve~Beth Orr.
\newblock \emph{Dynamics and Algorithms for Stochastic Search}.
\newblock PhD thesis, USA, 1996.

\bibitem[Polyak(1964)]{polyak}
B.T. Polyak.
\newblock Some methods of speeding up the convergence of iteration methods.
\newblock \emph{USSR Computational Mathematics and Mathematical Physics},
  4\penalty0 (5):\penalty0 1 -- 17, 1964.

\bibitem[Robbins and Monro(1951)]{robbins1951}
Herbert Robbins and Sutton Monro.
\newblock A stochastic approximation method.
\newblock \emph{Ann. Math. Statist.}, 22\penalty0 (3):\penalty0 400--407, 09
  1951.

\bibitem[Sutskever et~al.(2013)Sutskever, Martens, Dahl, and
  Hinton]{initialization2013}
Ilya Sutskever, James Martens, George Dahl, and Geoffrey Hinton.
\newblock On the importance of initialization and momentum in deep learning.
\newblock In \emph{Proceedings of the 30th International Conference on
  International Conference on Machine Learning - Volume 28}, ICML’13, page
  III–1139–III–1147. JMLR.org, 2013.

\bibitem[Ward et~al.(2019)Ward, Wu, and Bottou]{adagradstepsize}
Rachel Ward, Xiaoxia Wu, and L{\'{e}}on Bottou.
\newblock Adagrad stepsizes: sharp convergence over nonconvex landscapes.
\newblock In \emph{Proceedings of the 36th International Conference on Machine
  Learning, {ICML} 2019, 9-15 June 2019, Long Beach, California, {USA}}, pages
  6677--6686, 2019.

\end{thebibliography}

\end{document}